\begin{document}
\title{Adaptive Normalized Risk-Averting Training For Deep Neural Networks}
\author{Zhiguang Wang, Tim Oates \\ 
	Department of Computer Science and Electric Engineering \\
	University of Maryland Baltimore County \\
	zgwang813@gmail.com, oates@umbc.edu \\
	\And James Lo \\  
	Department of Mathematics and Statistics \\
	University of Maryland Baltimore County\\
	jameslo@umbc.edu
}
\maketitle
\begin{abstract}
	\begin{quote}
		This paper proposes a set of new error criteria and a learning approach, called Adaptive Normalized Risk-Averting Training (ANRAT) to attack the non-convex optimization problem in training deep neural networks without pretraining. Theoretically, we demonstrate its effectiveness based on the expansion of the convexity region. By analyzing the gradient on the convexity index $\lambda$, we explain the reason why our learning method using gradient descent works. In practice, we show how this training method is successfully applied for improved training of deep neural networks to solve visual recognition tasks on the MNIST and CIFAR-10 datasets. Using simple experimental settings without pretraining and other tricks, we obtain results comparable or superior to those reported in recent literature on the same tasks using standard ConvNets + MSE/cross entropy. Performance on deep/shallow multilayer perceptron and Denoised Auto-encoder is also explored. ANRAT can be combined with other quasi-Newton training methods, innovative network variants, regularization techniques and other common tricks in DNNs. Other than unsupervised pretraining, it provides a new perspective to address the non-convex optimization strategy in training DNNs.  
	\end{quote}
\end{abstract}

\section{Introduction}
Deep neural networks (DNNs) are attracting attention largely due to their impressive empirical performance in image and speech recognition tasks. 
While Convolutional Networks (ConvNets) are the de facto  state-of-the-art for visual recognition, Deep Belief Networks (DBN), Deep Boltzmann Machines (DBM) and Stacked Auto-encoders (SA) provide insights as generative models to learn the full generating distribution of input data.  
Recently, researchers have investigated various techniques to improve the learning capacity of DNNs. Unsupervised pretraining using Restrict Boltzmann Machines (RBM), Denoised Autoencoders (DA) or Topographic ICA (TICA) has proved to be helpful for training DNNs with better weight initialization \cite{ngiam2010tiled,coates2011selecting}. 
Rectified Linear Unit (ReLU) and variants are proposed as the optimal activation functions to better interpret hidden features 
Various regularization techniques such as dropout \cite{srivastava2014dropout} with Maxout \cite{goodfellow2013maxout} are proposed to regulate the DNNs to be less prone to overfitting.  

Neural network models always lead to a non-convex optimization problem. The optimization algorithm impacts the quality of the local minimum because it is hard to find a global minimum or estimate how far a particular local minimum is from the best possible solution. The most standard approach to optimize DNNs is Stochastic Gradient Descent (SGD). There are many variants of SGD and researchers and practitioners typically choose a particular variant empirically. While nearly all DNNs optimization algorithms in popular use are gradient-based, recent work has shown that more advanced second-order methods such as L-BFGS and Saddle-Free Newton (SFN) approaches can yield better results for DNN tasks \cite{ngiam2011optimization,dauphin2014identifying}. Second order derivatives can be addressed by hardware extensions (GPUs or clusters) or batch methods when dealing with massive data, SGD still provides a robust default choice for optimizing DNNs.

Instead of modifying the network structure or optimization techniques for DNNs, we focused on designing a new error function to convexify the error space. The convexification approach has been studied in the optimization community for decades, but has never been seriously applied within deep learning. Two well-known methods are the graduated nonconvexity method \cite{blake1987visual} and the Liu–Floudas convexification method \cite{liu1993remark}. Liu–Floudas convexification can be applied to optimization problems where the error criterion is twice continuously differentiable, although determining the weight $\alpha$ of the added quadratic function for convexifying the error criterion involves significant computation when dealing with massive data and parameters. 

Following the same name employed for deriving robust controllers and filters \cite{speyer1974optimization}, a new type of Risk-Averting Error (RAE) is proposed theoretically for solving non-convex optimization problems \cite{lo2010convexification}. Empirically, with the proposal of Normalized Risk-Averting Error (NRAE) and the Gradual Deconvexification method (GDC), this error criterion is proved to be competitive with the standard mean square error (MSE) in single layer and two-layer neural networks for solving data fitting and classification problems \cite{gui2014pairwise,lo2012overcoming}. Interestingly, SimNets, a generalization of ConvNets that was recently proposed in \cite{cohen2014simnets}, uses the MEX operator (whose name stands for Maximum-minimum-Expectation Collapsing Smooth) as an activation function to generalize ReLU activation and max pooling. We notice that the MEX operator with $L_2$ units has exactly the \textit{same} mathematical form with NRAE. However, NRAE is still hard to optimize in practice due to plateaus and the unstable error space caused by the fixed large convexity index. GDC alleviates these problems but its performance is limited and suffers from the slow learning speed. Instead of fixing the convexity index $\lambda$, Adaptive Normalized Risk-Averting Training (ANRAT) optimizes NRAE by tuning $\lambda$ adaptively using gradient descent. We give theoretical proofs of its optimal properties against the standard $L_p$-norm error. Our experiments on MNIST and CIFAR-10 with different deep/shallow neural nets demonstrate the effectiveness empirically. Being an optimization algorithm, our approach are not supposed to deal specifically with the problem of over-fitting, however we show that this can be handled by the usual methods of regularization such as weight decay or dropout.

\section{Convexification on Error Criterion}
We begin with the definition of RAE for the $L_p$ norm and the theoretical justifications on its convexity property. RAE is not suitable for real applications since it is not bounded. Instead, NRAE is bounded to overcome the register overflow in real implementations. We prove that NRAE is quasi-convex, and thus shares the same global and local optimum with RAE. Moreover, we show the lower-bound of its performance is as good as $L_p$-norm error when the convexity index satisfies a constraint, which theoretically supports the ANRAT method proposed in the next section.     

\subsection{Risk-averting Error Criterion}
\newtheorem{mythm}{Theorem}
\newtheorem{mydef}{Definition}

Given training samples $\{\boldsymbol{X},y\} = \{(\boldsymbol{x}_1, y_1), (\boldsymbol{x}_2, y_2), ..., (\boldsymbol{x}_m, y_m)\}$, the function $f(\boldsymbol{x}_i, \boldsymbol{W})$ is the learning model with parameters $W$. The loss function of $L_p$-norm error is defined as: 
\begin{eqnarray}
	l_p(f(\boldsymbol{x}_i,\boldsymbol{W}), y_i) = \frac{1}{m}\sum_{i=1}^{m}||f(\boldsymbol{x}_i,\boldsymbol{W})-y_i||^p
	\label{eqn:lp-error}
\end{eqnarray}

When $p=2$, Eqn. \ref{eqn:lp-error} denotes to the standard Mean Square Error (MSE). The Risk-Averting Error criterion (RAE) corresponding to the $L_p$-norm error is defined by
\begin{eqnarray}
	RAE_{p,q}(f(\boldsymbol{x}_i,\boldsymbol{W}), y_i) = \frac{1}{m}\sum_{i=1}^{m}e^{\lambda^q||f(\boldsymbol{x}_i,\boldsymbol{W})-y_i||^p}
	\label{eqn:lp-RAE}
\end{eqnarray}

$\lambda$ is the convexity index. It controls the size of the convexity region. 

%
%

Because RAE has the sum-exponential form, its Hessian matrix is tuned exactly by the convexity index $\lambda^q$. The following theorem indicates the relation between the convexity index and its convexity region.
\begin{mythm}[Convexity]
	Given the Risk-Averting Error criterion $RAE_{p,q}$ ($p, q \in \mathcal{N^+}$), which is twice continuous differentiable. $J_{p,q}(\boldsymbol{W})$ and $H_{p,q}(\boldsymbol{W})$ are the corresponding Jacobian and Hessian matrix. As $\lambda \to \pm\infty$, the convexity region monotonically expands to the entire parameter space except for the subregion $S:=\{W \in \mathcal{R}^n | rank(H_{p,q}(\boldsymbol{W}))<n, H_{p,q}(\boldsymbol{W}<0)\}$. 
	\label{thm:RAE} 
\end{mythm}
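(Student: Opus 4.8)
The plan is to write the Hessian $H_{p,q}(\boldsymbol{W})$ in closed form, isolate its leading behaviour in $\lambda$, and read off both the asymptotic convexity and the exceptional set $S$ from a dominant-balance argument. First I would introduce the residual $r_i=f(\boldsymbol{x}_i,\boldsymbol{W})-y_i$, the per-sample error $e_i=\|r_i\|^p$, its gradient $g_i=\nabla_{\boldsymbol{W}} e_i$, and its Hessian $G_i=\nabla^2_{\boldsymbol{W}} e_i$. Differentiating $RAE_{p,q}$ twice and applying the chain rule to the exponential gives
\begin{equation}
H_{p,q}(\boldsymbol{W})=\frac{\lambda^q}{m}\sum_{i=1}^{m} e^{\lambda^q e_i}\left[\lambda^q\,g_i g_i^{T}+G_i\right].
\end{equation}
The structural fact that drives everything is that each $g_i g_i^{T}$ is a rank-one positive semidefinite matrix, so the whole $\lambda^{2q}$-component $\sum_i e^{\lambda^q e_i} g_i g_i^{T}$ is positive semidefinite with strictly positive weights $e^{\lambda^q e_i}$.

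Next I would test positivity along an arbitrary unit direction $v$. Writing $a_i=(g_i^{T} v)^2\ge 0$ and $b_i=v^{T} G_i v$,
\begin{equation}
v^{T} H_{p,q}(\boldsymbol{W})\,v=\frac{\lambda^q}{m}\sum_{i=1}^{m} e^{\lambda^q e_i}\left[\lambda^q a_i+b_i\right].
\end{equation}
As $|\lambda|\to\infty$ with $\lambda^q\to+\infty$, the factor $\lambda^q a_i$ dominates the bounded curvature $b_i$ for every index with $a_i>0$, while the exponential weights concentrate on the samples of largest error $e^{*}=\max_i e_i$. Hence $v^{T}H_{p,q}v>0$ whenever $v$ is not orthogonal to the span of the dominating gradients, and the only directions that can keep the form non-positive are those lying in the common null space of those gradients.

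This pins the failure of convexity to exactly the degenerate situation in which the outer-product block is rank-deficient and the residual curvature $\sum_i e^{\lambda^q e_i} b_i$ along the surviving null direction stays negative---which is precisely the subregion $S$ described by $rank(H_{p,q})<n$ together with the negative-curvature condition. Everywhere else the dominating block is eventually positive definite on the orthogonal complement of its null space, so $\boldsymbol{W}$ enters the convexity region once $|\lambda|$ is large enough.

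For the monotone expansion I would study the smallest eigenvalue of the normalized Hessian $\lambda^{-q} m\,H_{p,q}(\boldsymbol{W})$ as a function of $\lambda$: for $\boldsymbol{W}\notin S$, once the $\lambda^q a_i$ contribution overcomes the bounded curvature terms the form stays positive, so the convexity sets are nested and fill out $\mathcal{R}^n\setminus S$ as $|\lambda|\to\infty$. The main obstacle is this monotonicity claim rather than the asymptotics: the dominant-balance argument only gives convexity in the limit, and upgrading it to a genuinely nested family requires controlling the sign of the derivative of the least eigenvalue in $\lambda$ (or a factorization showing that positive-definiteness, once attained, cannot be lost). A secondary subtlety is odd $q$ under $\lambda\to-\infty$: there $\lambda^q\to-\infty$ collapses the exponential weights onto the zero-error samples, whose gradients vanish when $p\ge 2$, so the dominating set of gradients---and hence the exact description of $S$---must be re-examined separately from the $\lambda^q\to+\infty$ regime.
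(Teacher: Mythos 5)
Your Hessian decomposition and the dominant-balance argument built on it are correct, and they follow what is essentially the standard route for this result (the paper itself relegates the proof to supplementary material, following Lo's convexification analysis): isolate the PSD outer-product block carrying the factor $\lambda^{2q}$, note that the exponential weights concentrate on the maximal-error samples, and conclude that asymptotic convexity can fail only where the dominating gradients fail to span $\mathcal{R}^n$ and the residual curvature on their common null space is negative. Your concentration observation is an essential subtlety: a sample with $e_j<e^{*}$ and $a_j>0$ contributes only $\lambda^{2q}e^{\lambda^q e_j}a_j$, which is exponentially smaller than the curvature terms $e^{\lambda^q e^{*}}b_i$ of the worst samples, so positivity really is decided by the maximal-error gradients alone. (Incidentally, the set $S$ as printed in the theorem is garbled --- $rank(H_{p,q})<n$ together with $H_{p,q}<0$ is vacuous, since a negative definite matrix has full rank --- and your reading of the exceptional set is the sensible repair.)

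The two caveats you flag at the end are genuine gaps, and the first cannot be closed as stated: pointwise monotone nestedness of the regions $C(\lambda)=\{W: H_{p,q}(\boldsymbol{W})\succeq 0\}$ is simply false for this criterion. Take $n=1$, $m=2$, $p=2$, $q=1$, and a point $W$ at which sample $1$ has positive residual $r_1$, $\partial f(\boldsymbol{x}_1,W)/\partial W=0$, and $\partial^2 f(\boldsymbol{x}_1,W)/\partial W^2<0$ (so $e_1>0$, $g_1=0$, and $b_1=2r_1\,\partial^2 f(\boldsymbol{x}_1,W)/\partial W^2<0$), while sample $2$ has $0<e_2<e_1$ and $g_2\neq 0$. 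Up to the positive prefactor $\lambda/m$,
\begin{equation}
v^{T}H_{p,q}v \;=\; e^{\lambda e_1}b_1 + e^{\lambda e_2}\left(\lambda a_2+b_2\right),
\end{equation}
whose positive part grows only linearly in $\lambda$ while the negative part carries the exponentially larger weight $e^{\lambda(e_1-e_2)}$; choosing $|b_1|$ small makes this positive at moderate $\lambda$ yet negative for all large $\lambda$, so $W$ enters the convexity region and later leaves it. Perturbing $\partial f(\boldsymbol{x}_1,W)/\partial W$ to a tiny nonzero value even yields a point outside $S$ whose membership in $C(\lambda)$ goes in, out, and in again, so nestedness fails off $S$ as well. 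Hence the strongest statement your argument --- or any argument --- can deliver is asymptotic coverage: every $W\notin S$ lies in $C(\lambda)$ for all sufficiently large $\lambda^q$, with ``monotone expansion'' read as a claim about the limiting region rather than nested sets; that is the version you should write up. Your second caveat is equally real: for odd $q$ and $\lambda\to-\infty$ the weights concentrate on the smallest errors, whose gradients vanish when $p\ge 2$, the overall prefactor $\lambda^q$ changes sign, and the $\lambda^{2q}$-block argument collapses; the conclusion survives only in the regime $\lambda^q\to+\infty$.
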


Please refer to the supplementary material for the proof. Intuitively, the use of the RAE was motivated by its emphasizing large individual deviations in approximating functions and optimizing parameters in an exponential manner, thereby avoiding such large individual deviations and achieving robust performances. Theoretically, Theorem \ref{thm:RAE} states that when the convexity index $\lambda$ increases to infinity, the convexity region in the parameter space of RAE expands monotonically to the entire space except the intersection of a finite number of lower dimensional sets. The number of sets increases rapidly as the number $m$ of training samples increases. Roughly speaking, larger $\lambda$ and $m$ cause the size of the convexity region to grow larger respectively in the error space of RAE.

When $\lambda \to \infty$, the error space can be perfectly stretched to be strictly convex, thus avoid the local optimum to guarantee a global optimum. Although RAE works well in theory, it is not bounded and suffers from the exponential magnitude and arithmetic overflow when using gradient descent in implementations .     

\subsection{Normalized Risk-Averting Error Criterion}
RAE ensures the convexity of the error space to find the global optimum. By using NRAE, we relax the global optimum problem by finding a better local optimum to meet a theoretically and practically reasonable trade-off in real applications.    

Given training samples $\{\boldsymbol{X},y\} = \{(\boldsymbol{x}_1, y_1), (\boldsymbol{x}_2, y_2), ..., (\boldsymbol{x}_m, y_m)\}$, the function $f(\boldsymbol{x}_i, \boldsymbol{W})$ is the learning model with parameters $W$. The Normalized Risk-Averting Error Criterion (NRAE) corresponding to the $L_p$-norm error is defined as:

\begin{eqnarray} 
	&& NRAE_{p,q}(f(\boldsymbol{x}_i,\boldsymbol{W}), y_i) \nonumber \\
	&& = \frac{1}{\lambda^q}\log RAE_{p,q}(f(\boldsymbol{x}_i,\boldsymbol{W}), y_i) \nonumber \\
	&& = \frac{1}{\lambda^q}\log \frac{1}{m}\sum_{i=1}^{m}e^{\lambda^q||f(\boldsymbol{x}_i,\boldsymbol{W})-y_i||^p}
	\label{eqn:lp-NRAE}
\end{eqnarray}

\begin{mythm}[Bounded]
	$NRAE_{p,q}(f(\boldsymbol{x}_i,\boldsymbol{W}), y_i)$ is bounded. 
\end{mythm}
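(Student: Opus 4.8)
The plan is to recognize $NRAE_{p,q}$ as a normalized log-sum-exp expression and to sandwich it between the largest and smallest individual residuals, both of which are finite for any fixed parameter vector $\boldsymbol{W}$ and finite training set. First I would abbreviate the per-sample residuals by $r_i := ||f(\boldsymbol{x}_i,\boldsymbol{W})-y_i||^p \ge 0$ and set $r_{\max}:=\max_i r_i$, $r_{\min}:=\min_i r_i$, so that
\begin{eqnarray}
NRAE_{p,q} = \frac{1}{\lambda^q}\log\left(\frac{1}{m}\sum_{i=1}^{m}e^{\lambda^q r_i}\right). \nonumber
\end{eqnarray}

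The second step treats the two sign regimes of $\lambda^q$ separately, since $q\in\mathcal{N}^+$ permits $\lambda^q<0$ when $\lambda<0$ and $q$ is odd. For $\lambda^q>0$, I would bound the inner average from below by keeping only the dominant term $\frac{1}{m}e^{\lambda^q r_{\max}}$, and from above by replacing every term with the largest one $e^{\lambda^q r_{\max}}$; taking $\log$ and dividing by the positive quantity $\lambda^q$ yields
\begin{eqnarray}
r_{\max}-\frac{\log m}{\lambda^q}\;\le\; NRAE_{p,q}\;\le\; r_{\max}. \nonumber
\end{eqnarray}
For $\lambda^q<0$ the same estimates, now with the inequalities reversed upon division through a negative number, pin $NRAE_{p,q}$ between $r_{\min}$ and $r_{\min}+\log m/|\lambda^q|$. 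In both regimes $NRAE_{p,q}$ lies in a bounded interval with finite endpoints, so the criterion is bounded.

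The point I want to emphasize is that this boundedness is \emph{uniform} in $\lambda$: the outer $\log$ cancels the exponential magnitude and the $1/\lambda^q$ factor cancels the scaling, so unlike $RAE_{p,q}$ — which diverges and overflows as $\lambda\to\pm\infty$ — the normalized criterion stays within $\log m/|\lambda^q|$ of a single residual. The same estimates simultaneously expose the limiting behavior, $NRAE_{p,q}\to r_{\max}$ (resp. $r_{\min}$) as $\lambda^q\to+\infty$ (resp. $-\infty$), i.e. NRAE is a soft-max (soft-min) of the residuals.

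The main obstacle here is bookkeeping rather than mathematical depth: I must track the sign of $\lambda^q$ carefully, because dividing the log-inequalities by a negative $\lambda^q$ flips their direction and swaps the roles of $r_{\max}$ and $r_{\min}$. A secondary subtlety is pinning down the intended sense of \emph{bounded} — here it means bounded as a function of the convexity index $\lambda$ for fixed finite data and parameters, which is precisely the overflow-avoiding property that motivates replacing RAE by NRAE.
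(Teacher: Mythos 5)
Your proof is correct and matches the paper's approach: the paper's (supplementary) argument likewise bounds $NRAE_{p,q}$ by quantities independent of $\lambda$ — exactly the log-sum-exp sandwich between the extreme residuals that you carry out, which also yields the no-overflow property for $|\lambda|\gg 1$ and the soft-max/soft-min limits the paper cites. One tiny refinement: near $\lambda^q\to 0$ your one-sided bounds $r_{\max}-\log m/\lambda^q$ and $r_{\min}+\log m/|\lambda^q|$ degrade, but combining the max- and min-term estimates in each regime gives the uniform sandwich $r_{\min}\le NRAE_{p,q}\le r_{\max}$ for all $\lambda^q\neq 0$, so the conclusion stands.
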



The proof is provided in the supplemental materials. Briefly, NRAE is bounded by functions independent of $\lambda$ and no overflow occurs for $\lambda \gg 1$. The following theorem states the quasi-convexity of NRAE. 

\begin{mythm}[Quasi-convexity]
	Given a parameter space $\{W \in \mathcal{R}^n \}$, Assume $\exists$ $\psi(W)$, s.t. $H_{p,q}(\boldsymbol{W}) > 0$ when $|\lambda^q|> \psi(W)$ to guarantee the convexity of $RAE_{p,q}(f(\boldsymbol{x}_i,\boldsymbol{W}), y_i)$. Then, $NRAE_{p,q}(f(\boldsymbol{x}_i,\boldsymbol{W}), y_i)$ is  \textbf{quasi-convex} and share the same local and global optimum with $RAE_{p,q}(f(\boldsymbol{x}_i,\boldsymbol{W}), y_i)$.
	\label{thm:quasi}
\end{mythm}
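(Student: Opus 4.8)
The plan is to exploit the fact that $NRAE_{p,q}$ is nothing more than a strictly monotone reparametrization of $RAE_{p,q}$, so that the two criteria automatically share their sublevel-set geometry and their critical structure. Writing $R(\boldsymbol{W}) := RAE_{p,q}(f(\boldsymbol{x}_i,\boldsymbol{W}),y_i)$, Eqn.~\ref{eqn:lp-NRAE} reads $NRAE_{p,q} = g\!\left(R(\boldsymbol{W})\right)$ with $g(t) = \frac{1}{\lambda^q}\log t$. Since $R(\boldsymbol{W})$ is a normalized sum of exponentials it is strictly positive, so $g$ acts on $(0,\infty)$; and for $\lambda^q>0$ the map $g$ is continuous and strictly increasing, with inverse $g^{-1}(c)=\exp(\lambda^q c)$ also strictly increasing. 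This single observation carries almost all of the argument.

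First I would establish quasi-convexity through the sublevel-set characterization: a function $h$ is quasi-convex exactly when each sublevel set $\{\boldsymbol{W}: h(\boldsymbol{W})\le c\}$ is convex. Using the strict monotonicity of $g$, I would rewrite
\begin{eqnarray}
\{\boldsymbol{W}: NRAE_{p,q}(\boldsymbol{W})\le c\} = \{\boldsymbol{W}: g(R(\boldsymbol{W}))\le c\} = \{\boldsymbol{W}: R(\boldsymbol{W})\le g^{-1}(c)\}, \nonumber
\end{eqnarray}
so each NRAE sublevel set coincides with an RAE sublevel set. The hypothesis that $H_{p,q}(\boldsymbol{W})>0$ whenever $|\lambda^q|>\psi(\boldsymbol{W})$ supplies convexity of $R$ on the relevant domain (its Hessian is positive definite there), and a convex function has convex sublevel sets. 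Hence every sublevel set of $NRAE_{p,q}$ is convex, which is precisely quasi-convexity.

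Next I would transfer the optima. Because $g$ is a strictly increasing bijection of $(0,\infty)$ onto its image, it preserves order: $R(\boldsymbol{W}^{\ast})\le R(\boldsymbol{W})$ if and only if $g(R(\boldsymbol{W}^{\ast}))\le g(R(\boldsymbol{W}))$. Letting $\boldsymbol{W}$ range over all of $\mathcal{R}^n$ shows that $\boldsymbol{W}^{\ast}$ is a global minimizer of $RAE_{p,q}$ iff it is a global minimizer of $NRAE_{p,q}$; restricting $\boldsymbol{W}$ to a neighborhood of $\boldsymbol{W}^{\ast}$ gives the same equivalence for local minimizers, since a monotone relabeling of function values leaves the neighborhood structure untouched. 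Thus the two criteria share both their local and global optima.

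The main obstacle is not the order-preservation bookkeeping but keeping the convexity hypothesis honest and fixing the sign of $\lambda^q$. Strict monotonicity of $g$, and hence the entire argument, requires $\lambda^q>0$; this holds in the practically relevant regime $\lambda>0$, but I would state it explicitly rather than let it slip, since for $\lambda^q<0$ the transformation reverses order and the sublevel-set identity fails. I would also invoke the hypothesis "$H_{p,q}(\boldsymbol{W})>0$ when $|\lambda^q|>\psi(\boldsymbol{W})$" precisely to guarantee convexity of $R$ on a convex domain, tying back to Theorem~\ref{thm:RAE} where the convexity region expands as $\lambda$ grows; because quasi-convexity is only meaningful on a convex domain, the boundary cases where $c$ lies below or above the range of $g$ (empty or full sublevel sets, both trivially convex) should be disposed of first so that the $g^{-1}(c)$ rewriting is legitimate for every $c$.
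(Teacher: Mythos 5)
Your proposal is correct and follows essentially the same route as the paper's own proof: both view $NRAE_{p,q}$ as a strictly increasing transformation $g(t)=\frac{1}{\lambda^q}\log t$ of $RAE_{p,q}$, deduce quasi-convexity from the assumed convexity of RAE under that monotone composition, and transfer local and global minimizers by order preservation. The only differences are presentational — you prove the monotone-composition rule via the sublevel-set characterization rather than citing it as a known fact, and you are more careful than the paper in flagging that $\lambda^q>0$ is required for $g$ to be increasing (a sign condition the paper's proof silently glosses over).
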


\begin{proof}
	If $RAE_{p,q}(f(\boldsymbol{x}_i,\boldsymbol{W}), y_i)$ is convex, it is quasi-convex. $\log$ function is monotonically increasing, so the composition  $\log RAE_{p,q}(f(\boldsymbol{x}_i,\boldsymbol{W}), y_i)$ is quasi-convex. \footnote{Because the function $f$ defined by $f(x) = g(U(x))$ is quasi-convex if the function $U$ is quasiconvex and the function $g$ is increasing.}  
	
	$\log$ is a strictly monotone function and $NRAE_{p,q}(f(\boldsymbol{x}_i,\boldsymbol{W}), y_i)$ is quasi-convex, so it shares the same local and global minimizer with $RAE_{p,q}(f(\boldsymbol{x}_i,\boldsymbol{W}), y_i)$.
\end{proof}

The convexity region of NRAE is consistent with RAE. To interpret this statement in another perspective, the $\log$ function is a strictly monotone function. Even  if RAE is not strictly convex, NRAE still shares the same local and global optimum with RAE. If we define the mapping function $f:RAE \to NRAE$, it is easy to see that $f$ is  bijective and continuous. Its inverse map $f^{-1}$ is also continuous, so that $f$ is an open mapping. Thus, it is easy to prove that the mapping function $f$ is a homeomorphism to preserve all the topological properties of the given space.

The above theorems state the consistent relations among NRAE, RAE and MSE. It is proven that the greater the convexity index $\lambda$, the larger is the convex region is. Intuitively, increasing $\lambda$ creates tunnels for a local-search minimization procedure to travel through to a good local optimum.  However, we care about the justification on the advantage of NRAE against MSE.  Theorem \ref{thm:lowerbound} provides the theoretical justification for the performance lower-bound of NRAE.      

\begin{mythm}[Lower-bound]
	Given training samples $\{\boldsymbol{X},y\} = \{(\boldsymbol{x}_1, y_1), (\boldsymbol{x}_2, y_2), ..., (\boldsymbol{x}_m, y_m)\}$ and the model $f(\boldsymbol{x}_i, \boldsymbol{W})$ with parameters $W$. If $\lambda^q \geq 1$, $p, q \in \mathcal{N^+}$ and $p \geq 2$, then both $RAE_{p,q}(f(\boldsymbol{x}_i,\boldsymbol{W}), y_i)$ and  $NRAE_{p,q}(f(\boldsymbol{x}_i,\boldsymbol{W}), y_i)$ always have the higher chance to find a better local optimum than the standard $L_p$-norm error due to the expansion of the convexity region.
	\label{thm:lowerbound}
\end{mythm}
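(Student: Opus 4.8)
The plan is to establish the claim through the Hessian, reducing the informal notion of a ``higher chance to find a better local optimum'' to the precise statement that the convexity region of $RAE_{p,q}$ contains that of the plain $L_p$-norm error once $\lambda^q \ge 1$, and then transfer this to $NRAE_{p,q}$ via Theorem~\ref{thm:quasi}. First I would introduce per-sample notation, writing $r_i := f(\boldsymbol{x}_i,\boldsymbol{W}) - y_i$ and $\phi_i(\boldsymbol{W}) := \|r_i\|^p$, so that the $L_p$ loss is $\frac{1}{m}\sum_i \phi_i$ and $RAE_{p,q} = \frac{1}{m}\sum_i e^{\lambda^q \phi_i}$. Differentiating twice, the per-term Hessian of $RAE_{p,q}$ is
\begin{eqnarray}
e^{\lambda^q\phi_i}\,\lambda^q\bigl(\lambda^q\,\nabla\phi_i\nabla\phi_i^{\top} + \nabla^2\phi_i\bigr),
\nonumber
\end{eqnarray}
whereas the per-term Hessian of the $L_p$ loss is simply $\nabla^2\phi_i$. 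The scalar prefactor $e^{\lambda^q\phi_i}\lambda^q$ is strictly positive and $\nabla\phi_i\nabla\phi_i^{\top}\succeq 0$ is rank-one positive semidefinite, so the exponential criterion carries an extra nonnegative curvature term that the $L_p$ loss lacks.

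Next I would exploit $\lambda^q \ge 1$ and $p \ge 2$. The condition $p \ge 2$ guarantees that $\phi_i = \|r_i\|^p$ is twice continuously differentiable and convex in the residual, so that Theorem~\ref{thm:RAE} applies and no singularity arises at $r_i = 0$, while $\lambda^q \ge 1$ ensures the added curvature term is at full strength, i.e. $\lambda^q \nabla\phi_i\nabla\phi_i^{\top} \succeq \nabla\phi_i\nabla\phi_i^{\top}$. Combining these, wherever the $L_p$ term is convex ($\nabla^2\phi_i \succeq 0$) the corresponding $RAE_{p,q}$ term is convex as well, and in addition the dominant rank-one contribution $\lambda^{2q}\nabla\phi_i\nabla\phi_i^{\top}$ can restore positive definiteness at points where $\nabla^2\phi_i$ is indefinite. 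This is precisely the monotone expansion asserted in Theorem~\ref{thm:RAE}, now anchored at the threshold $\lambda^q=1$: the convexity region of $RAE_{p,q}$ contains the convexity region of the $L_p$ loss and is generically strictly larger.

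Finally I would transfer the conclusion to $NRAE_{p,q}$. By Theorem~\ref{thm:quasi}, $NRAE_{p,q}$ is quasi-convex and shares every local and global optimum with $RAE_{p,q}$; since $NRAE_{p,q}$ is moreover bounded and related to $RAE_{p,q}$ by the strictly monotone homeomorphism $\tfrac{1}{\lambda^q}\log(\cdot)$, the two criteria have identical critical-point structure and identical convexity regions. A strictly larger convexity region means a local-search method such as gradient descent encounters fewer spurious minima and can traverse ``tunnels'' to a deeper basin, which is the sense in which the performance of $RAE_{p,q}$ and $NRAE_{p,q}$ is lower-bounded by that of the $L_p$ error.

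The step I expect to be the main obstacle is the passage from the per-term Hessian comparison to a statement about the full sums. Because each summand of $RAE_{p,q}$ is weighted by a different factor $e^{\lambda^q\phi_i}$, one cannot simply dominate $H_{p,q}(\boldsymbol{W})$ by a scalar multiple of the $L_p$ Hessian $\sum_i \nabla^2\phi_i$; the weights must be handled either termwise, assuming each residual's curvature is individually controlled, or by showing that the aggregated rank-one term $\lambda^{2q}\sum_i e^{\lambda^q\phi_i}\nabla\phi_i\nabla\phi_i^{\top}$ dominates the indefinite part of $\sum_i e^{\lambda^q\phi_i}\nabla^2\phi_i$, mirroring the eigenvalue argument behind Theorem~\ref{thm:RAE}. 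Turning the informal phrase ``higher chance'' into a rigorous comparison is the other delicate point, and I would pin it down as set containment of the regions where the respective Hessians are positive semidefinite.
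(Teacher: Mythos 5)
Your proposal follows essentially the same route as the paper's own proof: both compare the Hessian of $RAE_{p,q}$ against that of the $L_p$-norm error, identify the extra positive semidefinite gradient outer-product contribution (the paper's term $\frac{p^2}{m}\sum_{i}\|\alpha_i(\boldsymbol{W})\|^{2p-2}\frac{\partial f(\boldsymbol{x}_i,\boldsymbol{W})}{\partial \boldsymbol{W}}^2$ is exactly your $\nabla\phi_i\nabla\phi_i^{\top}$ term in aggregated form), invoke $\lambda^q \geq 1$ to anchor the domination, and transfer the conclusion to $NRAE_{p,q}$ via the quasi-convexity theorem. If anything, your write-up is the more careful of the two, since the paper's proof silently drops the per-sample exponential weights $e^{\lambda^q\phi_i}$ and phrases the comparison as an ``element-wise'' ordering of eigenvalue diagonals --- precisely the aggregation gap you flag as the main obstacle.
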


\begin{proof}
	Let $h_p(\boldsymbol{W})$ denotes the Hessian matrix of standard $L_p$-norm error (Eqn. \ref{eqn:lp-error}), note $\alpha_i(\boldsymbol{W})=f(\boldsymbol{x}_i,\boldsymbol{W})-y_i$ we have
	\begin{eqnarray}
		h_p(\boldsymbol{W}) &=&\frac{p}{m}\sum_{i=1}^{m}\{(p-1)\alpha_i(\boldsymbol{W})^{p-2} \frac{\partial f(\boldsymbol{x}_i,\boldsymbol{W})^2}{\partial \boldsymbol{W}}  \nonumber \\ &+& \alpha_i(\boldsymbol{W})^{p-1}\frac{\partial f^2(\boldsymbol{x}_i,\boldsymbol{W})}{\partial \boldsymbol{W}^2}\}
	\end{eqnarray}
	Since $\lambda^q \geq 1$, let $diag_{eig}$ denotes the diagonal matrix of the eigenvalues from SVD decomposition. $\succeq$ here means 'element-wise greater'. When $A \succeq B$, each element in A is greater than B. Then we have
	\begin{align}
		diag_{eig}[H_{p,q}(\boldsymbol{W})]  &\succeq diag_{eig}[h_p(\boldsymbol{W})
		+  \nonumber \\ &\frac{p^2}{m} \sum_{i=1}^{m}||\alpha_i(\boldsymbol{W})||^{2p-2}\frac{\partial f(\boldsymbol{x}_i,\boldsymbol{W})}{\partial \boldsymbol{W}}^2\}] \nonumber \\
		&\succeq diag_{eig}[h_{p}(\boldsymbol{W})]
	\end{align}	
	This indicates that the $RAE_{p,q}(f(\boldsymbol{x}_i,\boldsymbol{W}), y_i)$ always has larger convexity regions than the standard $L_p$-norm error to better enable escape of local minima. Because $NRAE_{p,q}(f(\boldsymbol{x}_i,\boldsymbol{W}), y_i)$ is quasi-convex, sharing the same local and global optimum with $RAE_{p,q}(f(\boldsymbol{x}_i,\boldsymbol{W}), y_i)$, the above conclusions are still valid.  
\end{proof}

Roughly speaking, NRAE always has a larger convexity region than the standard $L_p$-norm error in terms of their Hessian matrix when $\lambda \geq 1$. This property guarantees the higher probability to escape poor local optima using NRAE. In the worst case, NRAE will perform as good as standard $L_p$-norm error if the convexity region shrinks as $\lambda$ decreases or the local search deviates from the "tunnel" of convex regions.   

More specifically, $NRAE_{p,q}(f(\boldsymbol{x}_i,\boldsymbol{W}), y_i)$
\begin{itemize}
	\item approaches the standard $L_p$-norm error as $\lambda^q \to 0$.
	\item approaches the minimax error criterion $Min \alpha_{max}(\boldsymbol{W})$ as $\lambda^q \to \infty$.
\end{itemize} 


Please refer to the supplemental materials for the proofs. More rigid proofs that can be generalized to $L_p$-norm error are also given in \cite{lo2010convexification}. In SimNets, the authors also include quite similar discussions about the robustness with respect to $L_p$-norm error \cite{cohen2014simnets}.  

\section{Learning Methods}
We propose a novel learning method to training DNNs with NRAE, called the Adaptive Normalized Risk-Avering Training (ANRAT) approach. Instead of manually tuning $\lambda$ like GDC \cite{lo2012overcoming}, we  learn $\lambda$ adaptively in error backpropagation by considering $\lambda$ as a parameter instead of a hyperparameter. The learning procedure is standard batch SGD. We show it works quite well in theory and practice. 

The loss function of ANRAT is

\begin{eqnarray}
	l(W,\lambda) = \frac{1}{\lambda^q}\log \frac{1}{m}\sum_{i=1}^{m}e^{\lambda^q||f(\boldsymbol{x}_i,\boldsymbol{W})-y_i||^p} + a ||\lambda||^{-r}
	\label{eqn:genralloss}
\end{eqnarray}   

Together with NRAE, we also use a penalty term $a||\lambda||^{-r}$ to control the changing rate of $\lambda$. While minimize the NRAE score, small $\lambda$ is penalized to regulate the convexity region. $a$ is a hyperparameter to control the penalty index. The first-order derivatives on weight and $\lambda$ are

\begin{eqnarray}
	\frac{dl(W, \lambda)}{dW}  &=& \frac{p\sum_{i=1}^{m}e^{\lambda^q \alpha_i(W)^{p-1}}\frac{\partial f(\boldsymbol{x}_i,\boldsymbol{W})}{\partial \boldsymbol{W}}}{\sum_{i=1}^{m}e^{\lambda^q \alpha_i(W)^{p-1}}} \\
	\frac{dl(W, \lambda)}{d\lambda} &=& \frac{-q}{\lambda^{q+1}} \log \frac{1}{m} \sum_{i=1}^{m}e^{\lambda^q\alpha_i(W)^p} \\ &+& \frac{q}{\lambda}\frac{\sum_{i=1}^{m}e^{\lambda^q \alpha_i(W)^{p}}\alpha_i(W)^p}{\sum_{i=1}^{m}e^{\lambda^q \alpha_i(W)^{p}}}  \\ &-& ar\lambda^{-r-1} \label{eqn:glambda}
\end{eqnarray}

We make a transformation on Eqn. \ref{eqn:glambda} to better understand the gradient with respect to $\lambda$. Note that $k_i = \frac{e^{\lambda^q \alpha_i(W)^{p}}}{\sum_{i=1}^{m}e^{\lambda^q \alpha_i(W)^{p}}}$ is actually performing like a probability ($\sum_{i=1}^{m}k_i=1$).  Ignoring the penalty term, Eqn. \ref{eqn:glambda} can be formulated as follows:
\begin{eqnarray}
	\frac{dl(W, \lambda)}{d\lambda} &=&
	\frac{q}{\lambda}(\sum_{i=1}^{m}k_i\alpha_i(\boldsymbol{W})^p-NRAE) \nonumber \\
	&=& \frac{q}{\lambda}(\mathbb{E}(\alpha(\boldsymbol{W})^p)-NRAE) \nonumber \\
	&\approx&  \frac{q}{\lambda}(L_P\text{-norm error}-NRAE)
	\label{eqn:nglambda}
\end{eqnarray} 

Note that as $\alpha_i(\boldsymbol{W})^p$ becomes smaller, the expectation on $\alpha_i(\boldsymbol{W})^p$ approaches the standard $L_p$-norm error. Thus, the gradient on $\lambda$ is approximately the difference between NRAE and the standard $L_p$-norm error. Because large $\lambda$ can incur plateaus to prevent NRAE from finding better optima using batch SGD \cite{lo2012overcoming}, they need GDC to gradually deconvexify the NRAE to make the error space well shaped and stable. Through Eqn. \ref{eqn:nglambda}, ANRAT solve this problem in a more flexible and adaptive manner. When NRAE is larger, Eqn. \ref{eqn:nglambda} remains negative and makes $\lambda$ increase to enlarge the convexity region, facilitating the search in the error space for better optima. When NRAE is smaller, the learned parameters are seemingly going through the optimal "tunnel" for better optima. Eqn. \ref{eqn:nglambda} becomes positive to decrease $\lambda$ and helps NRAE not deviate far from the manifold of the standard $L_p$-norm error to make the error space stable without large plateaus. Thus, ANRAT adaptively adjusts the convexity index to find an optimal trade-off between better solutions and stability.      

This training approach has more flexibility. The gradient on $\lambda$ as the weighted difference between NRAE and the standard $L_P$-norm error, enables NRAE to approach the $L_P$-norm error by adjusting $\lambda$ gradually. Intuitively, it keeps searching the error space near the manifold of the $L_p$-norm error to find better optima in a way of competing with and at the same time relying on the standard $L_p$-norm error space.  

In Eqn. \ref{eqn:genralloss}, the penalty weight $a$ and index $r$ control the convergence speed by penalizing small $\lambda$. Smaller $a$ emphasizes tuning $\lambda$ to allow faster convergence speed between NRAE and $L_p$-norm error. Larger $a$ forces larger $\lambda$ for a better chance to find a better local optimum but runs the risk of plateaus and deviating far from the stable error space. $r$ regulates the magnitude of $\lambda$ and its derivatives in gradient descent.         

\section{Experiments}
We present the results from a series of experiments designed on the MNIST and CIFAR-10 datasets to test the effectiveness of ANRAT for visual recognition with DNNs. We did not explore the full hyperparameters in Eqn. \ref{eqn:genralloss}. Instead we fix the hyperparameters at $p=2$, $q=2$ and $r=1$ to mainly compare with MSE. So the final loss function of ANRAT we optimized is

\begin{eqnarray}
	l(W,\lambda) = \frac{1}{\lambda^2}\log \frac{1}{m}\sum_{i=1}^{m}e^{\lambda^2||f(\boldsymbol{x}_i,\boldsymbol{W})-y_i||^2} + a |\lambda|^{-1}
	\label{eqn:fixloss}
\end{eqnarray}

This loss function is minimized by batch SGD without complex methods, such as momentum, adaptive/hand tuned learning rates or tangent prop. The learning rate and penalty weight $a$ are selected in $\{1, 0.5, 0.1\}$ and $\{1, 0.1, 0.001\}$ on validation sets respectively. The initial $\lambda$ is fixed at 10. We use the hold-out validation set to select the best model, which is used to make predictions on the test set. All experiments are implemented quite easily in Python and Theano to obtain GPU acceleration \cite{Bastien-Theano-2012}. 

The MNIST dataset \cite{lecun1998gradient} consists of hand written digits 0-9 which are 28x28 in size. There are 60,000 training images and 10,000 testing images in total. We use 10000 images in training set for validation to select the hyperparameters and report the performance on the test set. We test our method on this dataset without data augmentation.

The CIFAR-10 dataset  \cite{krizhevsky2009learning} is composed of 10 classes of natural images. There are 50,000 training images in total and 10,000 testing images. Each image is an RGB image of size 32x32. For this dataset, we adapt pylearn2 \cite{goodfellow2013pylearn2} to apply the same global contrast normalization and ZCA whitening as was used by Goodfellow et. al \cite{goodfellow2013maxout}. We use the last 10,000 images of the training set as validation data for hyperparameter selection and report the test accuracy.

\section{Results and Discussion}
\subsection{Results on ConvNets}
\begin{figure*}[t]
	\centering
	\includegraphics[width=0.8\textwidth]{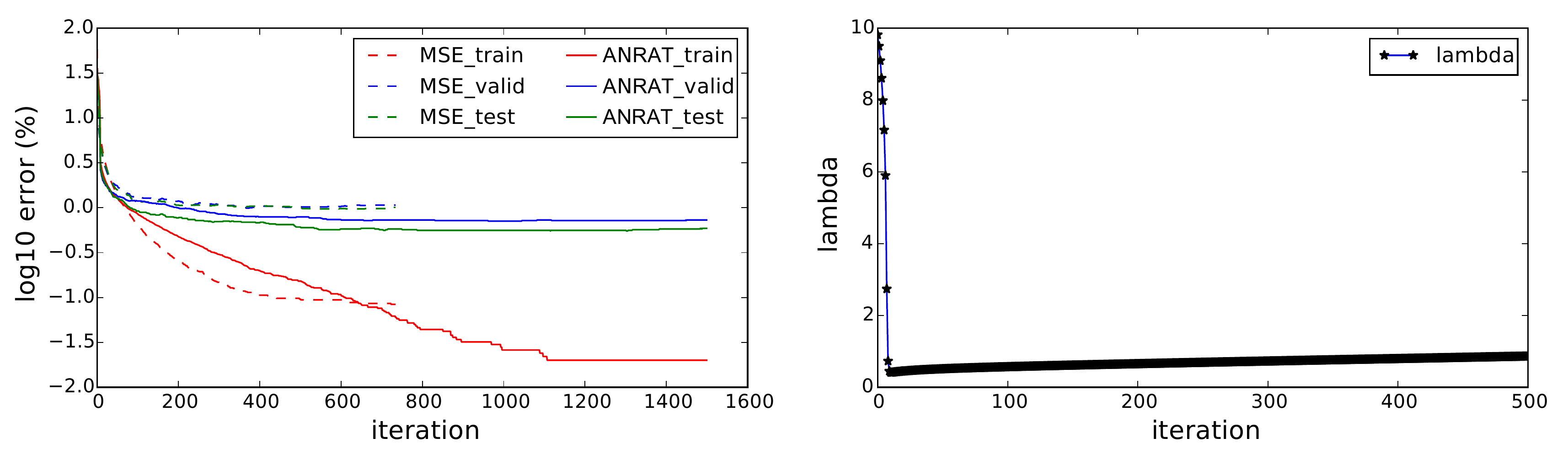}
	\caption{(a). MNIST train, validation and test error rates throughout training with Batch SGD for MSE and ANRAT with $l_2$ priors (left). (b). The curve of $\lambda$ throughout ANRAT training (right).} 
	\label{fig:MNISTcurve}
\end{figure*}

\begin{savenotes}
	\begin{table}[t]
		\centering
		\begin{tabular}{ll}
			\toprule
			Method\footnote{
				(1)\cite{mairal2014convolutional};(2)\cite{lee2014deeply};(3)\cite{lecun1998gradient};(4)\cite{ranzato2007unsupervised};(5)\cite{ngiam2011optimization};(6)\cite{ranzato2007unsupervised};(7)\cite{poultney2006efficient};(8)\cite{zeiler2013stochastic};(9)\cite{jarrett2009best}} & Error \% \\
			\midrule 
			Convolutional Kernel Networks + L-BFGS-B$^{(1)}$  & \textbf{0.39} \\
			Deeply Supervised Nets + dropout$^{(2)}$  & \textbf{0.39} \\
			\midrule
			\midrule
			ConvNets (Lenet-5)$^{(3)}$ & 0.95 \\
			ConvNets  + MSE/CE (this paper)  & 0.93 \\
			large ConvNets, random feature$^{(4)}$  & 0.89 \\
			ConvNets + L-BFGS$^{(5)}$  & 0.69 \\
			large ConvNets,  unsup pretraining$^{(6)}$  & 0.62  \\
			ConvNets, unsup pretraining$^{(7)}$  & 0.6  \\
			ConvNets + dropout$^{(8)}$   & 0.55  \\
			large ConvNets, unsup pretraining$^{(9)}$   & 0.53 \\
			ConvNets + ANRAT (This paper) & 0.52 \\
			ConvNets + ANRAT + dropout (This paper) & \textbf{0.39} \\
			\bottomrule
		\end{tabular}%
		\caption{Test set misclassification rates of the best methods that utilized convolutional networks on the original MNIST dataset using single model.}
		\label{tab:MNISTresults}%
	\end{table}%
\end{savenotes}

On the MNIST dataset we use the same structure of LeNet5 with two convolutional  max-pooling layers but followed by only one fully connected layer and a densely connected softmax layer. The first convolutional layer has 20 feature maps of size $5 \times 5$ and max-pooled by $2 \times 2$ non-overlapping windows. The second convolutional layer has 50 feature maps with the same convolutional and max-pooling size. The fully connected layer has 500 hidden units. An $l_2$ prior was used with the strength $0.05$ in the Softmax layer. Trained by ANRAT, we can obtain a test set error of 0.52\%, which is the best result we are aware of that does not use dropout on the pure ConvNets. We summarize the best published results on the standard MNIST dataset in Table \ref{tab:MNISTresults}.

The best performing neural networks for pure ConvNets that does not use dropout or unsupervised pretraining achieve an error of about 0.69\% \cite{ngiam2011optimization}. They demonstrated this performance with L-BFGS.  Using dropout, ReLU and a response normalization layer, the error reduces to 0.55\% \cite{zeiler2013stochastic}. Prior to that, Jarrett et. al showed by increasing the size of the network and using unsupervised pretraining, they can obtain a better result at 0.53\% \cite{jarrett2009best}. Previous state of the art is 0.39\% \cite{mairal2014convolutional,lee2014deeply} for a single model on the original MNIST dataset. Using batch SGD to optimize either CE or MSE on the ConvNets descried above, we can get an error rate at 0.93\%. Replacing the training methods with ANRAT using batch GD leads to a sharply decreased validation error of 0.66\% with a test error at 0.52\%. With dropout and ReLU the test error rate drops to 0.39\%, 
which is the same with the best results without averaging or data augmentation (Table \ref{tab:MNISTresults}) but we only use standard Convnets and simple experimental settings. 

Fig. \ref{fig:MNISTcurve} (a) shows the progression of training, validation and test errors over 160 training epochs. The errors trained on MSE plateau as it can not train the ConvNets sufficiently and seems like underfit. Using ANRAT, the validation and test errors remain decreasing along with the training error. During training, $\lambda$ sharply decrease, regulating the tunnel of NRAE to approach the manifold of MSE. Afterward the penalty term becomes significant, force $\lambda$ to grow gradually while expanding the convex region for higher chance to find the better optimum (Figure \ref{fig:MNISTcurve} (b)). 

%
%

Our next experiment is performed on the CIFAR-10 dataset. We observed significant overfitting using both MSE and ANRAT with the fixed learning rate and batch SGD, so dropout is applied to prevent the co-adaption of weights and improve generalization. We use a similar network layout as in \cite{srivastava2014dropout} but with only two convolutional  max-pooling layers. The first convolutional layer has 96 feature maps of size $5 \times 5$ and max-pooled by $2 \times 2$ non-overlapping windows. The second convolutional layer has 128 feature maps with the same convolutional and max-pooling size. The fully connected layer has 500 hidden units. Dropout was applied to all the layers of the network with the probability of retaining a hidden unit being $p = (0.9, 0.75, 0.5, 0.5, 0.5)$ for the different layers of the network.  Using batch SGD to optimize CE on the simple configuration of ConvNets + dropout, a test accuracy of 80.6 \% is achieved \cite{krizhevsky2012imagenet}. We also reported the performance at 80.58\% with MSE instead of CE with the similar network layout. Replacing the training methods with ANRAT using batch SGD gives a test accuracy of 85.15\%. This is superior to the results obtained by MSE/CE and unsupervised pretraining. 
In Table. \ref{tab:cifar10results}, our result with simple setting is shown to be competitive to those achieved by different ConvNet variants. 

%

\begin{savenotes}
	\begin{table}[t]
		\centering
		\caption{Test accuracy of the best methods that utilized convolutional framework on CIFAR-10 dataset without data augmentation.} 
		\begin{tabular}{ll}
			\toprule
			Method\footnote{
				(1)\cite{zeiler2013stochastic};(2)\cite{srivastava2014dropout};(3)\cite{goodfellow2013maxout};(4)\cite{zeiler2013stochastic};(5)\cite{coates2011selecting};(6)\cite{Lin2014Net};(7)\cite{lee2014deeply}} & Acc \% \\
			\midrule
			ConvNets + Stochastic pooling + dropout$^{(1)}$  & 84.87 \\
			ConvNets + dropout +Bayesian hyperopt$^{(2)}$   & 87.39 \\
			ConvNets + Maxout + dropout$^{(3)}$  & 88.32 \\
			Convolutional NIN + dropout$^{(6)}$ & 89.6 \\
			Deeply Supervised Nets + dropout$^{(7)}$ & \textbf{90.31} \\
			\midrule
			\midrule
			ConvNets + MSE + dropout  (this paper) & 80.58 \\
			ConvNets + CE + dropout$^{(4)}$  & 80.6 \\
			ConvNets + VQ unsup pretraining$^{(5)}$   & 82 \\    
			ConvNets + ANRAT + dropout (This paper) & \textbf{85.15} \\
			\bottomrule
		\end{tabular}%
		\label{tab:cifar10results}%
	\end{table}%
\end{savenotes}

\subsection{Results on Multilayer Perceptron}

On the MNIST dataset, MLPs with unsupervised pretraining has been  well studied in recent years, so we select this dataset to compare ANRAT in shallow and deep MLPs with MSE/CE and unsupervised pretraining. For the shallow MLPs, we follow the network layout as in \cite{gui2014pairwise,lecun1998gradient} that has only one hidden layer with 300 neurons. We build the stacked architecture and deep network using the same architecture as \cite{larochelle2009exploring} with 500, 500 and 2000 hidden units in the first, second and third layers, respectively. The training approach is purely batch SGD with no momentum or adaptive learning rate. No weight decay or other regularization technique is applied in our experiments.

Experiment results in Table. \ref{tab:MLPresults} show that the deep MLP classifier trained by the ANRAT method has the lowest test error rate (1.45\%) of benchmark MLP classifiers with MSE/CE under the same settings. It indicates that ANRAT has the ability to provide reasonable solutions with different initial weight vectors. This result is also better than deep MLP + supervised pretraining or Stacked Logistic Regression networks. We note that the deep MLP using unsupervised pretraining (auto-encoders or RBMs) remains to be the best with test error at 1.41\% and 1.2\%. Unsupervised pretraining is effective in initializing the weights to obtain a better local optimum. Compared with unsupervised pretraining + fine tuning, ANRAT sometimes still fall into the sightly worse local optima in this case. However, ANRAT is significantly better than MSE/CE without unsupervised pretraining.


Interestingly, we do not observe significant advantages with ANRAT in shallow MLPs. Although in early literature, the error rate on shallow MLPs were reported as 4.7\% \cite{lecun1998gradient} and 2.7\% with GDC \cite{gui2014pairwise}, both recent papers using CE \cite{larochelle2009exploring} and our own experiments with MSE can achieve error rate of 1.93\% and 2.02\%, respectively. Trained by ANRAT, we can have a test rate at 1.94\%. This performance is slightly better than MSE, but it is statistically identical to the performance obtained by CE. \footnote{in \cite{larochelle2009exploring}, the author do not report their network settings of the shallow MLP + CE, which may differ from 784-300-10.}. One possible reason is that in shallow networks which can be trained quite well by standard back propagation with normalized initializations, the local optimum achieved with MSE/CE is quite nearly a global optimum or good saddle point. Our result is also corresponding to the conclusion in \cite{dauphin2014identifying}, in which Dauphin et al. extend previous findings on networks with a single hidden layer to show theoretically and empirically that most badly suboptimal critical points are saddle points. Even with better convexity property, ANRAT is as good as MSE/CE in shallow MLPs. However, we find that the problem of poor local optimum becomes more manifest in deep networks. It is easier for ANRAT to find a way towards the better optimum near the manifold of MSE. For the sake of space, please refer to supplemental materials for the results on the shallow Denoised Auto-encoder. The conclusion is consistent that ANRAT performs better when attacking more difficult learning/fitting problems. While ANRAT is slightly better than CE/MSE + SGD on DA with uniform masking noise, it achieves a  significant performance boost when Gaussian block masking noise is applied.
\begin{savenotes}
	\begin{table}[t]
		\centering
		\caption{Test error rate of deep/shallow MLP with different training techniques.} 
		\begin{tabular}{ll}
			\toprule
			Method\footnote{(1)\cite{larochelle2009exploring};(2)\cite{lecun1998gradient};(3)\cite{gui2014pairwise}} & Error \% \\
			\midrule
			Deep MLP + supervised pretraining$^{(1)}$  & 2.04 \\
			Stakced Logistic Regression Network$^{(1)}$  & 1.85 \\
			Stacked Auto-encoder Network$^{(1)}$  & 1.41 \\
			Stacked RBM Network$^{(1)}$  & \textbf{1.2} \\
			\midrule
			\midrule
			Shallow MLP + MSE$^{(2)}$ & 4.7 \\
			Shallow MLP + GDC$^{(3)}$  & 2.7 $\pm$ 0.03 \\
			Shallow MLP + MSE (this paper) & 2.02 \\
			Shallow MLP + ANRAT (this paper) & 1.94 \\
			Shallow MLP + CE$^{(1)}$  & \textbf{1.93} \\
			\midrule
			Deep MLP + CE$^{(1)}$  & 2.4 \\
			Deep MLP + MSE (this paper) & 1.91 \\ 
			Deep MLP + ANRAT (this paper) & \textbf{1.45} \\    
			\bottomrule
		\end{tabular}%
		\label{tab:MLPresults}%
	\end{table}%
\end{savenotes}

\section{Conclusions and Outlook}
In this paper, we introduce a novel approach, Adaptive Normalized Risk-Averting Training (ANRAT), to help train deep neural networks. 
Theoretically, we prove the effectiveness of Normalized Risk-Averting Error on its arithmetic bound, global convexity and local convexity lower-bounded by standard $L_p$-norm error when convexity index $\lambda \geq 1$. By analyzing the gradient on $\lambda$, we explained the reason why using back propagation on $\lambda$ works. The experiments on deep/shallow network layouts demonstrate comparable or better performance with the same experimental settings among pure ConvNets and MLP + batch SGD on MSE and CE (with or without dropout). Other than unsupervised pretraining, it provides a new perspective to address the non-convex optimization strategy in DNNs. 


Finally, while these early results are very encouraging, clearly further research is warranted to address the questions that arise from non-convex optimization in deep neural networks. It is preliminarily showed that in order to generalize to a wide array of tasks, unsupervised and semi-supervised learning using unlabeled data is crucial. One interesting future work is to take advantage of unsupervised/semi-supervised pretraining with the non-convex optimization methods to train deep neural networks by finding the nearly global optimum. Another crucial question is to guarantee the generalization capability by preventing overfitting. Finally, we are quite interested in generalizing our approach to recurrent neural networks. We leave as future work any performance improvement on benchmark datasets by considering the cutting-edge approach to improve training and generalization performance.

\newpage
\bibliographystyle{aaai}
\bibliography{wang}
\end{document}